\theoremstyle{plain}
\newtheorem{theorem}{Theorem}[section]
\theoremstyle{definition}
\theoremstyle{remark}
\def\sm{\mathrm{softmax}}
\icmltitlerunning{Nexus: Higher-Order Attention Mechanisms in Transformers}
\begin{document}

\twocolumn[
\icmltitle{Nexus: Higher-Order Attention Mechanisms in Transformers}





\begin{icmlauthorlist}
\icmlauthor{Hanting Chen}{noah}
\icmlauthor{Chong Zhu}{noah}
\icmlauthor{Kai Han}{noah}
\icmlauthor{Yuchuan Tian}{noah}
\icmlauthor{Yuchen Liang}{noah}
\icmlauthor{Tianyu Guo}{noah}
\icmlauthor{Xinghao Chen}{noah}
\icmlauthor{Dacheng Tao}{ntu}
\icmlauthor{Yunhe Wang}{noah}
\end{icmlauthorlist}

\icmlaffiliation{ntu}{Nanyang Technological University}
\icmlaffiliation{noah}{Huawei Noah's Ark Lab}

\icmlcorrespondingauthor{Hanting Chen}{chenhanting@huawei.com}
\icmlcorrespondingauthor{Yunhe Wang}{yunhe.wang@huawei.com}

\icmlkeywords{Machine Learning, ICML}

\vskip 0.3in
]



\printAffiliationsAndNotice{}  

\begin{abstract}

Transformers have achieved significant success across various domains, relying on self-attention to capture dependencies. However, the standard first-order attention mechanism is often limited by a low-rank bottleneck, struggling to capture intricate, multi-hop relationships within a single layer. In this paper, we propose the \textbf{Nexus}, a novel architecture designed to enhance representational power through a recursive framework. Unlike standard approaches that use static linear projections for Queries and Keys, Nexus dynamically refines these representations via nested self-attention mechanisms. Specifically, the Query and Key vectors are themselves outputs of inner attention loops, allowing tokens to aggregate global context and model high-order correlations \textit{prior} to the final attention computation. We enforce a parameter-efficient weight-sharing strategy across recursive steps, ensuring that this enhanced expressivity incurs $\mathcal{O}(1)$ additional parameters. We provide theoretical analysis demonstrating that our method breaks the linear bottleneck of standard attention. Empirically, Nexus outperforms standard Transformers on multiple benchmarks. 
\end{abstract}

\section{Introduction}
The Transformer architecture~\cite{vaswani2017attention} has emerged as a foundational technology for sequence modeling, achieving widespread success in a variety of domains such as natural language processing (NLP)~\cite{brown2020language} and computer vision (CV)~\cite{dosovitskiy2020image}. Its core self-attention mechanism allows for the efficient capture of long-range dependencies, a crucial feature for tasks that require modeling relationships between distant elements in sequences. The architecture's ability to handle large datasets and complex relationships has made it the model of choice for many state-of-the-art solutions across different fields~\cite{openai2023chatgpt,jumper2021highly,bi2023accurate}.

Despite its successes, the standard Transformer faces inherent limitations in expressivity. Recent theoretical works suggest that the self-attention matrix suffers from a rank collapse issue, limiting its ability to model complex, hierarchical relationships essential for logical reasoning. As tasks demand more intricate dependency modeling, simply scaling model depth and width yields diminishing returns. This limitation is particularly evident in tasks involving multi-step reasoning and symbolic manipulation, where the flat structure of standard attention struggles to maintain logical consistency.

In recent years, several methods have been proposed to improve the Transformer model. One approach focuses on reducing the computational complexity of the self-attention mechanism. For instance, Linear Attention methods, such as Linformer \cite{wang2020linformer} and Performer \cite{choromanski2020rethinking}, aim to approximate the full attention matrix with low-rank projections or kernel-based approximations, thereby reducing the time complexity from $O(n^2)$ to $O(n)$. Similarly, Reformer \cite{kitaev2019reformer} introduces locality-sensitive hashing to approximate the attention matrix, lowering the complexity to $O(n \log n)$. Although these new mechanisms can significantly reduce computational complexity, the model's capacity and ability to capture complex relationships will be severely diminished.

In order to break through the model's capacity limitations, more significant research has focused on enhancing the representational power of Transformers to better handle complex tasks. For example, Attention on Attention \cite{huang2019attention} introduces additional layers of attention mechanisms to boost the model's ability to capture intricate dependencies. Similarly, Deformable Attention \cite{xia2022vision} draws on the idea of deformable convolutions, enabling the attention mechanism to adapt more flexibly to the diverse requirements of different tasks. This method demonstrates stronger performance when dealing with irregular inputs. \citet{wei2023multimodal} have proposed high-order relation mechanisms, such as introducing multi-level structures into the self-attention mechanism to enhance the model's ability to represent multimodal data. These approaches have further improved the ability of Transformers to process complex hierarchical structures. These advancements highlight the potential of Transformers to handle richer, multi-level data representations, which enhances performance on tasks that require modeling more complex, hierarchical dependencies. However, despite these advancements, such improvements have primarily been applied to vision models, and have not been as extensively explored in language models due to the unique characteristics of natural language tasks.

To bridge this gap, we propose a novel \textbf{Higher-Order Attention Mechanism} specifically designed to enhance the reasoning capabilities of Transformers. Instead of simple linear projections, our approach recursively refines Query and Key vectors through nested attention loops. This recursive formulation allows the model to capture high-order interactions—effectively performing a "pre-reasoning" step to align semantic relationships before the final attention computation. While recursive computation inherently increases inference FLOPs, we introduce a \textbf{Weight-Shared} strategy that maintains the same parameter count as standard Transformers. This design presents a strategic trade-off: investing more compute per token to achieve significantly higher reasoning density and expressivity. We demonstrate both theoretically and empirically that this mechanism breaks the low-rank bottleneck. Furthermore, we show that Nexus is not just a pre-training architecture but a powerful "upgrade kit" for existing LLMs, significantly boosting the mathematical reasoning abilities of models like Qwen2.5 through architectural retrofitting.

\begin{figure*}[ht]
	\centering
	\includegraphics[width=0.9\linewidth]{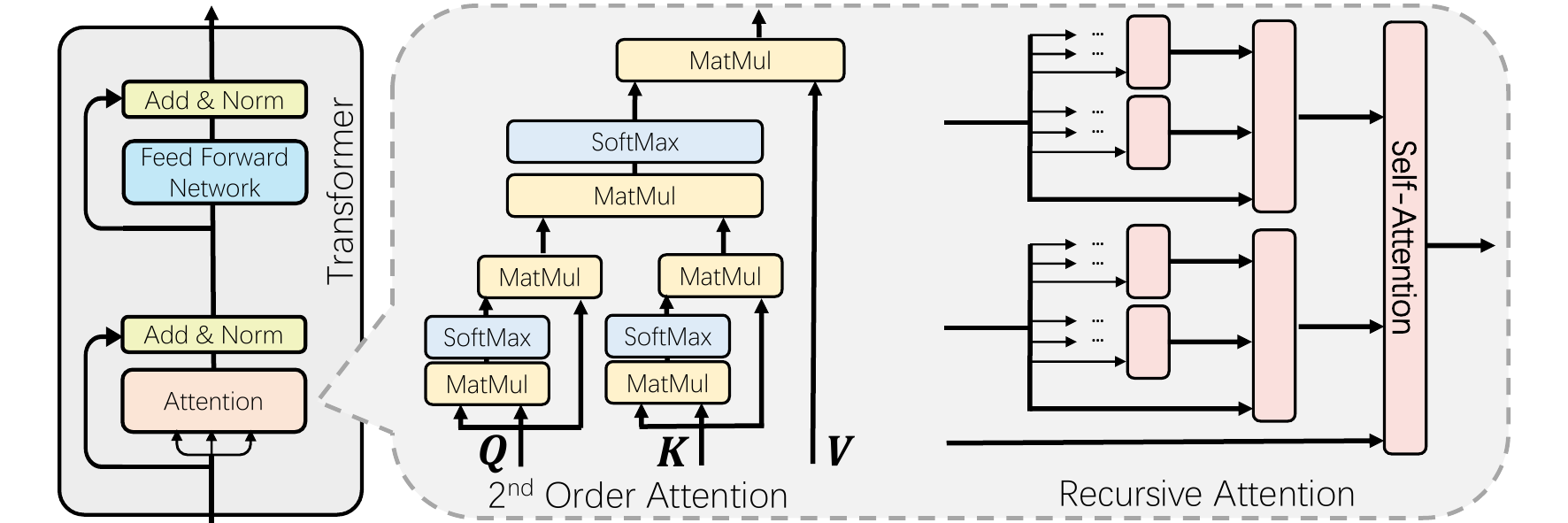}
	\caption{\textbf{Overview of the Nexus.} 
    The figure illustrates the hierarchical structure of our proposed mechanism. 
    \textbf{Left:} The integration of the Nexus layer within a standard Transformer block, replacing the conventional self-attention module. 
    \textbf{Middle:} The detailed architecture of the \textbf{2nd-Order Attention} mechanism. Unlike standard attention where $Q$ and $K$ are linear projections, Nexus recursively refines $Q$ and $K$ through inner self-attention loops (MatMul $\rightarrow$ SoftMax $\rightarrow$ MatMul) before the final attention computation. This allows the model to capture intricate dependencies prior to the main interaction. 
    \textbf{Right:} The generalized \textbf{Recursive Attention} framework, demonstrating how the mechanism can be extended to arbitrary orders ($m$-th order) to model deeper hierarchical relationships.}
	\label{fig:step_attn}
\end{figure*}

\section{Related Works}

\subsection{Efficient Transformer Models}

While Transformer models~\cite{vaswani2017attention} have demonstrated state-of-the-art performance~\cite{openai2023chatgpt,touvron2023llama}, they suffer from quadratic time complexity, $O(n^2)$, with respect to sequence length $n$. This computational bottleneck becomes particularly problematic in tasks involving long sequences. To address this challenge, several strategies have been explored to reduce the computational cost of self-attention mechanisms. One prominent approach is to approximate the full attention matrix with low-rank projections. For instance, Linformer \cite{wang2020linformer} leverages low-rank matrix factorization to reduce the complexity of self-attention from $O(n^2)$ to $O(n)$, assuming that the attention matrix can be approximated by a low-dimensional subspace. Another approach, Performer \cite{choromanski2020rethinking}, replaces the traditional Softmax attention with a kernel-based approximation, which enables the model to compute attention with $O(n)$ complexity, using a probabilistic approach based on normal distribution. Furthermore, Reformer \cite{kitaev2019reformer} employs locality-sensitive hashing to approximate the attention matrix, reducing the complexity to $O(n \log n)$ by limiting attention computation to hash buckets of similar tokens. Other methods, such as Sparse Transformer \cite{child2019generating} and Longformer \cite{beltagy2020longformer}, sparsify the attention mechanism by limiting the interactions between sequence positions, thus enabling more efficient computation while maintaining model performance on long sequences. Despite their efficiency, these models often struggle to fully retain the expressiveness and fine-grained relationships captured by the original $O(n^2)$ self-attention mechanism, especially in highly complex tasks.

\subsection{Enhancing Representation Power of Transformers}

While reducing the computational complexity of Transformers has been a major focus, another important line of research aims to enhance the representational capacity of Transformers to handle increasingly complex tasks. As Transformer models are applied to more sophisticated NLP problems, such as multi-modal understanding and hierarchical reasoning, their ability to represent nuanced and multi-level relationships becomes critical. A simple extension of the traditional attention mechanism is often not sufficient to capture the intricate structures present in such tasks.

To overcome this limitation, various methods have been proposed to increase the expressive power of Transformers. Attention on Attention \cite{huang2019attention}, for example, introduces additional layers of attention to allow the model to capture more complex dependencies. This approach adds a second level of attention, which refines the relationships between tokens based on higher-order interactions. Similarly, Deformable Attention \cite{xia2022vision} incorporates the idea of deformable convolutions into the attention mechanism. By allowing the attention operation to adapt to varying task demands, it provides the model with a more flexible and task-specific attention mechanism, particularly useful when dealing with irregular or sparse inputs.

Another promising approach is the introduction of high-order relation extraction~\cite{wei2023multimodal}, where additional layers or recursive attention operations are used to enhance the model's ability to process multi-level, hierarchical data. These mechanisms enable the Transformer to handle more complex, multi-modal tasks and capture long-range dependencies more effectively. Higher-Order Attention Network~\cite{hajij2022higher} introduces a new class of graph neural networks built on combinatorial complexes, which generalize both hypergraphs and cell complexes, demonstrating competitive or superior performance on tasks like mesh shape analysis and graph learning.

However, while these methods significantly enhance the representational power of Transformers, they are often applied primarily in vision or graph tasks. The adoption of these advanced mechanisms in NLP tasks remains less explored, as natural language has unique challenges that differ from those encountered in vision, such as syntactic and semantic complexities.

\subsection{Reasoning and Chain-of-Thought in LLMs}
The ability of LLMs to perform complex reasoning, often elicited via Chain-of-Thought (CoT) prompting \cite{wei2022chain}, has become a focal point of research. While scaling laws \cite{kaplan2020scaling} suggest that larger models generally reason better, architectural limitations remain. Standard self-attention treats tokens as a flat sequence, which may struggle to maintain the rigid logical consistency required for multi-step mathematical or symbolic reasoning \cite{dziri2023faith}. Recent works have attempted to bolster reasoning via external verifiers \cite{cobbe2021training} or iterative prompting strategies. In contrast, our work seeks to internalize this iterative reasoning capability directly into the attention mechanism. By recursively refining Queries and Keys, the Nexus effectively performs a coherent "pre-reasoning" step within the layer itself, potentially reducing the reliance on massive depth for logical deduction.

\section{Method}

In this section, we introduce the proposed Higher-Order Attention Mechanism, designed to enhance the Transformer architecture's ability to capture complex, hierarchical dependencies and support chain-of-thought (CoT) reasoning. We begin by analyzing the limitations of the standard self-attention mechanism, followed by the definition of our Higher-Order Attention. We then extend this mechanism recursively to capture multi-level dependencies and integrate it into the Transformer framework. Finally, we provide a theoretical analysis of the advantages of our approach.

\subsection{Challenges of the Standard Self-Attention Mechanism}

The standard self-attention mechanism in Transformers computes the attention output as follows:
\begin{equation}
	\text{Attention}(X) = \text{softmax}\left(\frac{QK^\top}{\sqrt{d_k}}\right)V,
\end{equation}
where \( Q = W_q X \), \( K = W_k X \), and \( V = W_v X \) are the transformed query, key, and value matrices, respectively, and \( d_k \) is the dimensionality of the key vectors. Here, \( X \in \mathbb{R}^{n \times d} \) represents the input sequence with length \( n \) and feature dimension \( d \).

While the self-attention mechanism is effective at capturing pairwise dependencies between elements in the sequence, it inherently limits the model's ability to represent higher-order interactions. Specifically, the attention weights \( A = \text{softmax}\left(\frac{QK^\top}{\sqrt{d_k}}\right) \in \mathbb{R}^{n \times n} \) only model direct interactions between pairs of tokens. This pairwise focus hinders the model's capacity to perform multi-step reasoning or to capture more intricate, hierarchical relationships that span multiple tokens. As a result, standard Transformer models encounter performance bottlenecks when dealing with tasks that require modeling complex, multi-level dependencies or long-range contextual information. This limitation underscores the need for enhanced attention mechanisms that can support deeper interactions and more sophisticated reasoning processes within the Transformer architecture.

To comprehensively understand the constraints of the standard self-attention mechanism, we delved into its interaction dynamics and expressive capabilities. The conventional self-attention computes pairwise interactions between tokens, effectively capturing first-order dependencies. Mathematically, for any two tokens \( x_i \) and \( x_j \) in the input sequence \( X \), the attention output focuses solely on their direct interaction as expressed by the attention weights \( A_{ij} \):
\begin{equation}
	A_{ij} = \text{softmax}\left(\frac{Q_i K_j^\top}{\sqrt{d_k}}\right)
\end{equation}
where \( Q_i = W_q x_i \) and \( K_j = W_k x_j \) are the query and key vectors for tokens \( x_i \) and \( x_j \), respectively, and \( d_k \) is the dimensionality of the key vectors. This formulation inherently restricts the model to capturing only pairwise (first-order) interactions between tokens.

However, many complex tasks require understanding higher-order dependencies, where the relationship among three or more tokens is essential. For instance, in natural language processing tasks such as semantic role labeling or multi-hop reasoning, the meaning or inference often depends on the interplay among multiple tokens simultaneously. To model such interactions, higher-order dependencies need to be explicitly represented. 

Consider the interaction among three tokens \( x_i \), \( x_j \), and \( x_k \). The standard self-attention mechanism would require sequential pairwise computations to infer the combined effect of these tokens:
\begin{align}
	A_{ij} &= \text{softmax}\left(\frac{Q_i K_j^\top}{\sqrt{d_k}}\right), \\
	A_{ik} &= \text{softmax}\left(\frac{Q_i K_k^\top}{\sqrt{d_k}}\right), \\
	A_{jk} &= \text{softmax}\left(\frac{Q_j K_k^\top}{\sqrt{d_k}}\right).
\end{align}
To capture the triadic interaction among \( x_i \), \( x_j \), and \( x_k \), multiple layers of self-attention would need to be stacked, or iterative reasoning steps would have to be performed. This not only increases the computational burden but also propagates potential information loss and gradient vanishing issues across layers, thereby limiting the model's ability to effectively learn and utilize higher-order relationships.

Moreover, representing higher-order interactions through repeated pairwise computations is inherently inefficient and may not scale well with the increasing complexity of dependencies in longer sequences. The necessity to stack multiple Transformer layers or perform multi-step reasoning introduces additional parameters and computational overhead, which can be detrimental to both training and inference efficiency.

To address these challenges, it is imperative to develop an attention mechanism that can natively capture higher-order interactions among multiple tokens within a single computational step. Such a mechanism would enhance the model's capacity to represent complex, multi-token dependencies without the need for excessive layering or iterative processes, thereby improving both efficiency and performance in tasks requiring sophisticated reasoning capabilities.

\subsection{Higher-Order Attention Mechanism}

To overcome the aforementioned limitations of the standard self-attention mechanism, we introduce the \textbf{Higher-Order Attention Mechanism}. This mechanism is designed to capture multi-token interactions directly, enabling the model to represent higher-order dependencies within a single attention computation.
\begin{equation}
	\begin{aligned}
		&\text{H-Attention}(X) = \\
		&\text{Attention}(\text{Attention}_q(X), \text{Attention}_k(X), V)
	\end{aligned}
\end{equation}
Here, \( \text{Attention}_q(X) \) and \( \text{Attention}_k(X) \) are refined query and key representations obtained by applying the self-attention mechanism separately to the queries and keys:
\begin{align}
	\text{Attention}_q(X) &= \text{softmax}\left(\frac{Q Q^\top}{\sqrt{d_k}}\right) Q \\
	\text{Attention}_k(X) &= \text{softmax}\left(\frac{K K^\top}{\sqrt{d_k}}\right) K
\end{align}
By first refining the queries and keys through self-attention, the Higher-Order Attention Mechanism effectively incorporates higher-order interactions into the attention computation. Specifically, \( \text{Attention}_q(X) \) and \( \text{Attention}_k(X) \) encapsulate aggregated information from multiple tokens, allowing the subsequent attention operation to consider multi-token dependencies directly.

This can be formalized as:

\begin{equation}
	\small
	\begin{aligned}
		&\text{H-Attention}(X) =\\ &\text{softmax}\left(\frac{\left(\text{Attention}_q(X)\right)\left(\text{Attention}_k(X)\right)^\top}{\sqrt{d_k}}\right)V
	\end{aligned}
\end{equation}

The incorporation of \( \text{Attention}_q(X) \) and \( \text{Attention}_k(X) \) enables the model to capture interactions that span beyond pairwise token relationships. For example, in the case of three tokens \( x_i \), \( x_j \), and \( x_k \), the Higher-Order Attention can simultaneously consider their combined influence on the output, effectively modeling triadic dependencies within a single attention layer.

This approach significantly enhances the model's capacity to perform multi-token reasoning and to capture hierarchical structures inherent in complex data. By embedding higher-order interactions directly into the attention mechanism, the Transformer architecture gains a more expressive power, allowing it to handle tasks that require sophisticated dependency modeling with greater efficiency and reduced computational overhead.

\subsection{Recursive Higher-Order Attention}

Building upon the Higher-Order Attention, we introduce a recursive extension to further enhance the model's ability to capture hierarchical and long-range dependencies. The Recursive Higher-Order Attention is defined as:

\begin{equation}
	\small
	\begin{aligned}
		&\text{H}^2\text{-Attention}(X) = \\
		&\text{Attention}(\text{H-Attention}_q(X), \text{H-Attention}_k(X), V)
	\end{aligned}
\end{equation}

Generalizing this, the \( n \)-th order attention can be defined recursively as:

\begin{equation}
	\small
	\begin{aligned}
		&\text{H}^m\text{-Attention}(X) = \\
		&\text{Attention}(\text{H}^{m-1}\text{-Attention}_q(X), \text{H}^{m-1}\text{-Attention}_k(X), V)
	\end{aligned}
\end{equation}

Each recursive step further processes the query and key vectors, enabling the model to capture multi-level dependencies and more intricate relational structures within the data. This recursive formulation allows the Transformer to effectively perform multi-step reasoning, akin to chain-of-thought processes, by integrating information from successive layers of attention computation.

\subsection{Parameter-Efficient Weight Sharing}
A potential concern with higher-order mechanisms is the increase in parameter count. A naive implementation of $m$-th order attention would require distinct projection matrices for each recursive step, scaling the parameters linearly with $m$. 

To address this, we propose a \textbf{Weight-Shared Higher-Order Attention} strategy. Based on the hypothesis that the semantic transformation required to project a vector into a "Query" or "Key" space is fundamentally similar across recursive levels, we enforce weight sharing between the inner and outer attention layers.

Formally, for a standard attention layer defined by parameters $\theta = \{W_q, W_k, W_v\}$, the recursive step in Equation (8) is modified such that the inner attention mechanisms reuse the same $\theta$:
\begin{equation}
    \text{H-Attention}_q(X; \theta) = \text{Attention}(X, X, X; \theta) \cdot W_q,
\end{equation}
where the inner attention uses the same projection weights as the outer loop.
This constraint ensures that the parameter complexity of Nexus remains identical to that of a standard Transformer, i.e., $\mathcal{O}(1)$ with respect to the recursive order $m$. As demonstrated in our ablation study, this strategy maintains high performance while maximizing parameter efficiency.

\subsection{Complexity Analysis}

Introducing the Higher-Order Attention Mechanism inherently increases the computational complexity of the attention operations within the Transformer architecture. To quantify this increase, we analyze both the standard self-attention mechanism and the proposed Higher-Order Attention Mechanism.

In the standard self-attention mechanism, the computational complexity for this operation is $\mathcal{O}(n^2 d_k)$ with \( n \) being the sequence length and \( d_k \) the dimensionality of the key vectors.

The Higher-Order Attention Mechanism extends the standard self-attention by introducing \( m \)-th order interactions. To determine the computational complexity \( T(m) \) for the \( m \)-th order attention, consider the recursive nature of the mechanism:
\begin{equation}
	T(m) = 2 \cdot T(m-1) + \mathcal{O}(n^2 d_k)
\end{equation}
with the base case \( T(0) = \mathcal{O}(n^2 d_k) \). Thus, the computational complexity of the \( m \)-th order Higher-Order Attention Mechanism is $\mathcal{O}\left(2^{m} n^2 d_k\right)$. 

While the time complexity scales exponentially with the recursive order $m$, it is important to note that in practice, a small recursion depth (e.g., $m=2$) is sufficient to achieve significant performance gains, as shown in our ablation studies. Consequently, the actual computational overhead is a constant factor (approximately $2\times$ for $m=2$) compared to standard attention. 

Crucially, thanks to our \textbf{Weight-Sharing} strategy, the \textit{parameter complexity} remains $\mathcal{O}(1)$ with respect to $m$. This makes Nexus highly memory-efficient during training compared to simply stacking more Transformer layers, as it does not increase the model size (storage), only the computational density.

\subsection{Linear Bottleneck of Standard Attention Mechanism}
\citet{bhojanapalli2020low} pointed out that when $d_k < n$, the standard attention mechanism lacks the ability to express arbitrary attention weights $A$. This phenomenon is referred to as the \textit{low-rank bottleneck in attention mechanisms}.

\begin{table*}[t]
	\centering
	\caption{Main Results: Zero-shot accuracy comparison between Pythia baselines and our proposed Nexus across different model scales. The best performance for each scale is highlighted in \textbf{bold}.}
	\label{tab:main_results}
	\begin{tabular}{ll|cccccc|c}
		\toprule
		\textbf{Scale} & \textbf{Model} & \textbf{ARC-C} & \textbf{ARC-E} & \textbf{Hellaswag} & \textbf{LogiQA} & \textbf{PiQA} & \textbf{SciQ} & \textbf{Avg.} \\
		\midrule
		\multirow{2}{*}{70M} 
		& Pythia & \textbf{0.208} & 0.359 & 0.356 & 0.276 & 0.569 & 0.615 & 0.397 \\
		& \textbf{Nexus (Ours)} & 0.204 & \textbf{0.382} & \textbf{0.358} & \textbf{0.287} & \textbf{0.586} & \textbf{0.685} & \textbf{0.417} \\
		\midrule
		\multirow{2}{*}{160M} 
		& Pythia & 0.200 & 0.385 & 0.380 & 0.260 & 0.600 & 0.686 & 0.419 \\
		& \textbf{Nexus (Ours)} & \textbf{0.211} & \textbf{0.405} & \textbf{0.385} & \textbf{0.285} & \textbf{0.605} & \textbf{0.713} & \textbf{0.434} \\
		\midrule
		\multirow{2}{*}{410M} 
		& Pythia & 0.225 & 0.394 & 0.375 & 0.285 & 0.601 & 0.708 & 0.431 \\
		& \textbf{Nexus (Ours)} & \textbf{0.226} & \textbf{0.415} & \textbf{0.384} & \textbf{0.294} & \textbf{0.608} & \textbf{0.733} & \textbf{0.443} \\
		\midrule
		\multirow{2}{*}{1B} 
		& Pythia & \textbf{0.232} & 0.440 & 0.395 & \textbf{0.296} & 0.625 & 0.758 & 0.458 \\
		& \textbf{Nexus (Ours)} & 0.230 & \textbf{0.455} & \textbf{0.399} & 0.290 & \textbf{0.636} & \textbf{0.777} & \textbf{0.465} \\
		\bottomrule
	\end{tabular}
\end{table*}

With the rise of Chain-of-Thought (CoT) reasoning, long-sequence tasks have become increasingly common, making the condition $d_k < n$ very prevalent. As a compromise, one could focus on approximating \(\log(A)\) with a low-rank approximation. Unfortunately, we find that the standard attention mechanism still lacks the ability to adequately represent low-rank matrices:
\begin{theorem} \label{th:lb}
	(Linear Bottleneck) (1) Given any $N$ different inputs \(X_m \in \mathbb{R}^{n \times d}, m=1, \dots, N\) and the corresponding 
	target row stochastic matrices \(A_m \in \mathbb{R}^{n \times n}\), as long as \(rank(\log(A_m)) \leq d_k\), there always 
	exists two mappings \(Q, K: \mathbb{R}^{n \times d} \rightarrow \mathbb{R}^{n \times d_k}\) such that
	\begin{equation} \label{eq:lb}
		\sm(\frac{Q(X_m)K(X_m)^\top}{\sqrt{d_k}}) = A_m, \quad m=1, \dots, N.	
	\end{equation}
	(2) If $d<n-1$, there exist \(A_m \in \mathbb{R}^{n \times n}\)  that satisfies \(rank(\log(A_m))=1\) but \eqref{eq:lb} still does not hold for all 
	linear transformations $Q(X)=XW_q, K(X)=XW_k$.
\end{theorem}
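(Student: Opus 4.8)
The plan is to handle the two parts separately, since (1) is an existence statement about \emph{arbitrary} mappings whereas (2) is a rigidity statement about \emph{linear} parametrizations. Both rest on one elementary fact about row-wise softmax: for any logit matrix \(M\) and any \(c\in\mathbb{R}^n\), \(\sm(M)=\sm(M+c\mathbf{1}^\top)\), and conversely \(\sm(M)=A\) holds iff \(M=\log A + c\mathbf{1}^\top\) for some \(c\). In particular, for a row-stochastic \(A\) one has \(\sm(\log A)=A\), because each row of \(A\) sums to one. Thus reproducing a target \(A\) is the same as matching \(\log A\) \emph{modulo} the row-shift subspace \(\mathcal{C}=\{c\mathbf{1}^\top\}\).

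For part (1), I would exploit that \(Q,K\) need only be \emph{functions} on \(\mathbb{R}^{n\times d}\) and that the inputs \(X_1,\dots,X_N\) are distinct, so their values can be assigned pointwise and independently. For each \(m\), the hypothesis \(\mathrm{rank}(\log A_m)\le d_k\) lets me factor \(\sqrt{d_k}\,\log A_m = P_m R_m^\top\) with \(P_m,R_m\in\mathbb{R}^{n\times d_k}\). Setting \(Q(X_m)=P_m\), \(K(X_m)=R_m\) (and defining \(Q,K\) arbitrarily off the finite set \(\{X_m\}\)) yields \(Q(X_m)K(X_m)^\top/\sqrt{d_k}=\log A_m\), hence \(\sm(\cdot)=A_m\). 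This step is essentially free; the real content is part (2).

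For part (2), the decisive observation is that row-stochasticity makes rank-one logarithms extremely rigid. I would realize a valid target by taking a non-uniform probability vector \(r\in\mathbb{R}^n\) (positive entries summing to one) and setting \(A=\mathbf{1}r^\top\), so every row equals \(r\); then \(\log A=\mathbf{1}(\log r)^\top\) has rank exactly one, and \(\log r\) is \emph{not} proportional to \(\mathbf{1}\) precisely because \(r\) is non-uniform, i.e.\ \(\{\log r,\mathbf{1}\}\) is linearly independent. Since I am constructing the counterexample, I am free to choose the input \(X\in\mathbb{R}^{n\times d}\) to have \(\mathbf{1}\notin\mathrm{col}(X)\), which is possible because \(d<n-1<n\). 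Now suppose toward a contradiction that linear maps succeed, i.e.\ \(W:=W_qW_k^\top\) and some \(c\) satisfy \(\tfrac{1}{\sqrt{d_k}}XWX^\top=\mathbf{1}(\log r)^\top+c\mathbf{1}^\top\). Because \(\mathbf{1}\notin\mathrm{col}(X)\) there is \(z\in\mathrm{col}(X)^\perp\) with \(z^\top\mathbf{1}\neq 0\); left-multiplying by \(z^\top\) annihilates the left side (as \(z^\top X=0\)) and leaves \((z^\top\mathbf{1})(\log r)^\top+(z^\top c)\mathbf{1}^\top=0\). Linear independence of \(\log r\) and \(\mathbf{1}\) forces \(z^\top\mathbf{1}=0\), a contradiction; hence no linear \(Q,K\) reproduce \(A\).

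I expect the main obstacle to be conceptual rather than computational: recognizing that the row-stochastic constraint collapses the family of rank-one-\(\log\) targets onto matrices with \emph{identical rows}. Indeed, prescribing an arbitrary rank-one \(\log A=pq^\top\) is inconsistent with the simplex constraint \(\sum_j e^{p_i q_j}=1\) unless \(p\parallel\mathbf{1}\): the convex map \(s\mapsto\sum_j e^{s q_j}\) meets the level \(1\) in at most one point (and in none when \(q\) has mixed signs), forcing all \(p_i\) equal. Once the target is chosen as \(\mathbf{1}r^\top\), the non-reachability reduces to the orthogonal-complement argument above, and the only delicate points are comparing \(XWX^\top\) with \(\log A\) modulo \(\mathcal{C}\) and invoking independence of \(\log r,\mathbf{1}\), which is exactly where non-uniformity of \(r\) is used. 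I would also note that the argument only needs \(d<n\); the stated \(d<n-1\) is a harmless strengthening of the hypothesis.
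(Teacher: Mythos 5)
Your part (1) is correct and essentially identical to the paper's argument: both use $\sm(\log A_m)=A_m$ for row-stochastic $A_m$ and a rank-$d_k$ factorization of $\sqrt{d_k}\log A_m$ (the paper takes the SVD), assigning $Q,K$ pointwise on the finitely many distinct inputs; your remark that distinctness makes this well defined is a detail the paper leaves implicit.

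Part (2), however, has a genuine gap: a quantifier slip on the input $X$. In the theorem the inputs are \emph{given} (``Given any $N$ different inputs $X_m$''), so part (2) must exhibit, \emph{for an arbitrary fixed} $X$ with $d<n-1$, a target $A$ (allowed to depend on $X$) that no linear $W_q,W_k$ can realize. The paper's proof does exactly this: it fixes an arbitrary $X$ and chooses $\mathbf{a}$ linearly independent of $\mathrm{Col}(X)\cup\{\mathbf{1}\}$, which exists because $d+1<n$. You instead choose the input yourself, demanding $\mathbf{1}\notin\mathrm{Col}(X)$, which proves only the weaker ``there exist $X$ and $A$'' statement. The gap is not merely formal: if the given $X$ satisfies $\mathbf{1}\in\mathrm{Col}(X)$ and also $\log r\in\mathrm{Col}(X)$ (possible whenever $d\ge 2$), then your target $A=\mathbf{1}r^\top$ \emph{is} realizable by linear maps. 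Indeed, write $\mathbf{1}=Xu$ and $\log r=Xv$, and set $W_qW_k^\top=\sqrt{d_k}\,uv^\top$ (rank one, hence factorizable through $d_k\ge 1$ columns); then $XW_qW_k^\top X^\top/\sqrt{d_k}=\mathbf{1}(\log r)^\top$, whose row-wise softmax is exactly $\mathbf{1}r^\top$. So the counterexample must be adapted to the given $X$, and your closing remark that ``$d<n$ suffices, $d<n-1$ is a harmless strengthening'' is a symptom of the same issue: $d<n-1$ is precisely what guarantees $\dim\bigl(\mathrm{Col}(X)+\mathrm{span}\{\mathbf{1}\}\bigr)\le d+1<n$ for \emph{every} input, which the correct quantification requires.

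The repair is straightforward and keeps the flavor of your argument. Given arbitrary $X$, choose $\mathbf{a}\notin\mathrm{Col}(X)+\mathrm{span}\{\mathbf{1}\}$, shift it along $\mathbf{1}$ so that $\sum_j e^{a_j}=1$ (such shifts preserve the independence), and set $A=\mathbf{1}\exp(\mathbf{a})^\top$, i.e.\ $r=\exp(\mathbf{a})$. If $XWX^\top/\sqrt{d_k}=\mathbf{1}\mathbf{a}^\top+c\mathbf{1}^\top$, pick $z\perp\bigl(\mathrm{Col}(X)+\mathrm{span}\{\mathbf{1}\}\bigr)$ with $z^\top\mathbf{a}\neq 0$ and \emph{right}-multiply by $z$: the left side vanishes since $X^\top z=0$, while the right side equals $(\mathbf{a}^\top z)\mathbf{1}$, a contradiction. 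Note that your original move — left-multiplying by $z^\top$ with $z^\top\mathbf{1}\neq 0$ — is unavailable once $\mathbf{1}\in\mathrm{Col}(X)$, which is exactly the case your construction excludes; the paper's equivalent step is to transpose a single row of the identity, observing that every row of $XWX^\top$, transposed, lies in $\mathrm{Col}(X)$ while the target row is $\mathbf{a}+c_i\mathbf{1}$. Your structural observation that row-stochasticity forces any rank-one $\log A$ to have identical rows is correct and a nice addition, but it does not substitute for adapting the choice of $r$ to $X$.
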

We provide the complete proof in Appendix \ref{ap:A}. The first proposition in theorem \ref{th:lb} demonstrates that if the mappings \(Q\) and \(K\) with respect to \(X\) are sufficiently flexible, the attention mechanism has the ability to represent \(\log(A_m)\) with rank lower than \(d_k\). 
On the other hand, the second proposition shows that the standard attention mechanism fails to represent even a log-attention-weight matrix with a rank of 1.

This motivates us to design more flexible and nonlinear mappings for \(Q\) and \(K\). A natural choice is the \textit{High-Order Attention Mechanism}, where we define:
\begin{equation}
	Q(X) = \text{Attention}_q(X), \quad K(X) = \text{Attention}_k(X).
\end{equation}
As depicted in Figure~\ref{fig:attn_maps}, the attention matrices of the Nexus network exhibit more intricate and interconnected patterns compared to the standard Transformer and other higher-order transformers. These richer patterns indicate that the Nexus network effectively captures multifaceted relationships among tokens, facilitating enhanced reasoning and contextual understanding. Specifically, the Nexus network's attention matrices demonstrate a higher degree of connectivity and diversity in attention weights, reflecting its ability to model complex, hierarchical dependencies more comprehensively.

\section{Experiments}

In this section, we conduct extensive experimental validation of the proposed architecture, encompassing results across language models of varying scales. Additionally, we provide detailed analyses to substantiate the effectiveness of our approach.

\subsection{Evaluation on Different Scales}
Given the challenge of replicating the training processes of most language models, as only their checkpoints are openly available, we opted to validate our method using Pythia~\cite{biderman2023pythia}, a model with a fully public dataset and training procedure, enabling fair comparisons. 

We adhered to the exact training settings employed by Pythia, including learning rates, optimizers, and other hyperparameters, and utilized the Pile dataset. The Pile~\cite{gao2020pile} is an 825 GiB corpus of English text, specifically designed for training large-scale language models. This project is composed of 22 distinct, high-quality subsets, both pre-existing and newly constructed, many of which originate from academic or professional sources. This comprehensive and diverse dataset serves as a robust foundation for developing and fine-tuning language models Our Nexus model was trained with the same setting as pre-trained Pythia model. We evaluated our approach on six public datasets used by Pythia: PIQA~\cite{bisk2020piqa}, Hellaswag~\cite{zellers2019hellaswag}, Sciq~\cite{welbl2017crowdsourcing}, ARC-E, ARC-C~\cite{clark2018think}, and LogiQA~\cite{liu2020logiqa}.

Table~\ref{tab:main_results} summarizes the results. The Nexus network outperforms the standard Transformer baseline in terms of average accuracy across all model scales. Notably, our method achieves substantial gains on tasks requiring multi-step reasoning or long-context integration, such as \textbf{SciQ} (+6\% on 70M) and \textbf{PiQA}. While the baseline Pythia model occasionally performs slightly better on retrieval-heavy tasks like ARC-C at specific scales (e.g., 1B), Nexus demonstrates a more robust capability in handling complex logical dependencies, leading to superior overall performance. This aligns with our hypothesis that higher-order attention is particularly beneficial for tasks where the relationship between tokens cannot be captured by simple pairwise interactions.

\subsection{Ablation Study}
To thoroughly validate the design choices of our Higher-Order Attention Network (Nexus), we conducted a series of ablation studies on the 70M parameter scale. We utilized the same training configuration as the Pythia-70M baseline. Our analysis focuses on three key aspects: (1) the selection of projections for higher-order attention, (2) the impact of parameter sharing on efficiency, and (3) the effect of recursive order depth. The results are summarized in Table~\ref{tab:ablation_results}.

\begin{table*}[t]
	\centering
	\caption{Ablation study of different Higher-Order Attention configurations on the 70M scale model. \textbf{Proj.} denotes which projections utilize the attention mechanism (others use linear). \textbf{Shared} indicates whether inner and outer attention layers share weights. \textbf{Order} denotes the recursion depth. The best performance is highlighted in \textbf{bold}.}
	\resizebox{\linewidth}{!}{
		\begin{tabular}{l|ccc|cccccc|c}
			\toprule
			\textbf{Model ID} & \textbf{Proj.} & \textbf{Shared} & \textbf{Order} & \textbf{ARC-C} & \textbf{ARC-E} & \textbf{Hellaswag} & \textbf{LogiQA} & \textbf{PiQA} & \textbf{SciQ} & \textbf{Avg.} \\
			\midrule
			Baseline (Pythia-70M) & - & - & - & 0.208 & 0.359 & 0.356 & 0.277 & 0.569 & 0.615 & 0.397 \\
			\midrule
			\multicolumn{11}{l}{\textit{Effect of Higher-Order Components}} \\
			Nexus-Q  & Q & No & 2 & 0.198 & 0.374 & 0.345 & 0.286 & 0.579 & 0.621 & 0.400 \\
			Nexus-QK  & Q, K & No & 2 & 0.207 & 0.367 & \textbf{0.359} & 0.281 & 0.577 & 0.663 & 0.409 \\
			Nexus-QKV  & Q, K, V & No & 2 & 0.209 & 0.380 & 0.354 & \textbf{0.293} & 0.579 & 0.636 & 0.409 \\
			\midrule
			\multicolumn{11}{l}{\textit{Parameter Efficiency \& Recursive Depth}} \\
			Nexus-QK-Shared  & Q, K & Yes & 2 & 0.206 & 0.357 & 0.367 & 0.273 & 0.601 & 0.630 & 0.406 \\
			Nexus-Recursive  & Q, K & Yes & 3 & \textbf{0.201} & \textbf{0.391} & \textbf{0.368} & 0.287 & \textbf{0.582} & \textbf{0.659} & \textbf{0.415} \\
			\bottomrule
		\end{tabular}
	}
	\label{tab:ablation_results}
\end{table*}

\paragraph{Selection of Higher-Order Components.}
We first investigated which components of the attention mechanism ($Q$, $K$, $V$) benefit most from higher-order processing. As shown in Table~\ref{tab:ablation_results}, applying higher-order attention only to the Query projection (Nexus-Q) yields a marginal improvement over the baseline (0.400 vs. 0.397). However, extending the mechanism to both Query and Key (Nexus-QK) results in a significant performance boost, raising the average accuracy to 0.409. Interestingly, further applying the mechanism to the Value projection (Nexus-QKV) does not provide additional gains compared to Nexus-QK (both achieve $\sim$0.409). This suggests that the core benefit of our method lies in refining the correlation alignment between Queries and Keys, while a simple linear projection suffices for Values. Consequently, we adopted the Q and K configuration for subsequent experiments.

\paragraph{Parameter Efficiency via Weight Sharing.}
While Nexus-QK achieves superior performance, the introduction of separate attention layers for projections increases the parameter count. To improve parameter efficiency, we proposed a weight-sharing strategy (Nexus-QK-Shared), where the parameters of the inner attention layers are shared with the outer main attention layer. Comparing Nexus-QK and Nexus-QK-Shared, we observe a slight performance drop (0.409 $\rightarrow$ 0.406), yet the shared model still consistently outperforms the baseline. This trade-off allows us to maintain the high expressiveness of higher-order attention while keeping the parameter count comparable to the standard Transformer, maximizing parameter utilization.

\paragraph{Impact of Recursive Order.}
Finally, we explored the potential of increasing the recursive depth. By extending the shared model to a 3rd-order attention (Nexus-Recursive), the performance further improves to 0.415, surpassing all other configurations. This result empirically validates our theoretical claim that higher recursive orders can capture more intricate dependencies. However, increasing the order also incurs higher computational costs during inference. Considering the balance between efficiency and accuracy, we selected the 2nd-order shared configuration (Nexus-QK-Shared) as the default setting for our main experiments, as it offers a robust improvement over the baseline with optimal resource efficiency.

\subsection{Visualization of Attention Patterns}
\label{sec:viz}

To intuitively understand how the Higher-Order Attention mechanism processes information differently from the standard Transformer, we visualize the average attention weights across all layers and heads for the 70M scale models. We compare the baseline Pythia model with our proposed Nexus model. For the Nexus model, we examine three distinct attention maps: the distinct inner recursive attentions for Queries ($\text{Nexus}_Q$) and Keys ($\text{Nexus}_K$), and the final outer attention ($\text{Nexus}_{\text{outer}}$). The visualization results are presented in Figure~\ref{fig:attn_maps}.

\begin{figure*}[t]
    \centering
    \begin{minipage}{0.24\linewidth}
        \centering
        \includegraphics[width=\linewidth]{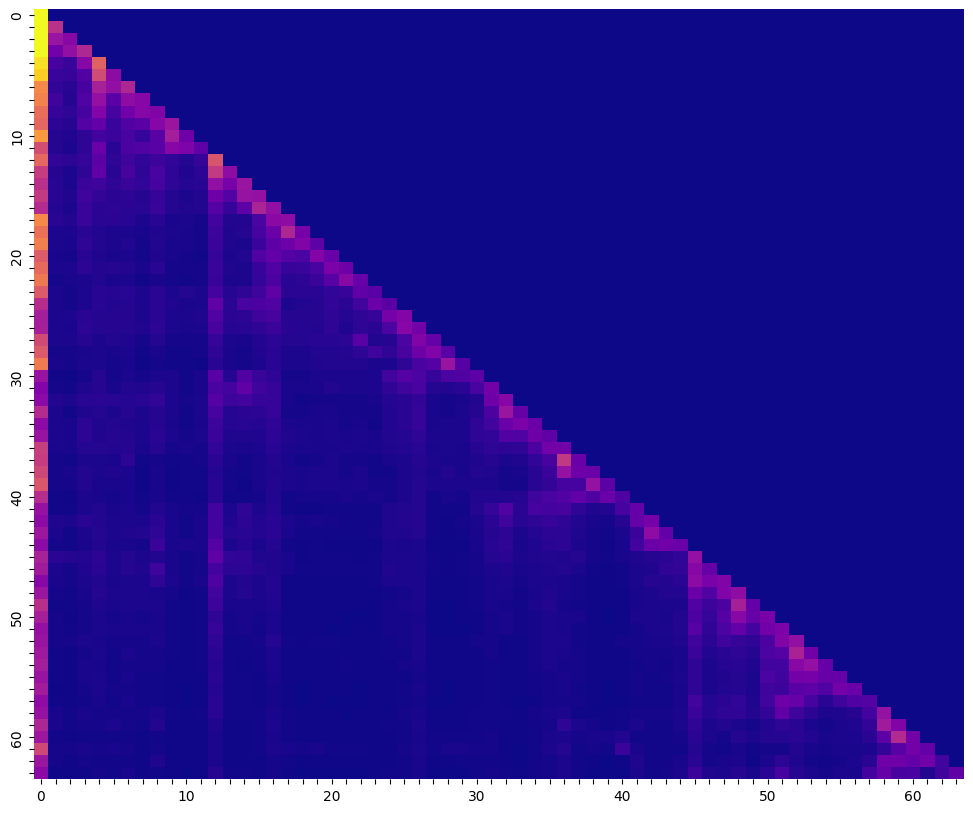}
        \centerline{\small (a) Pythia Baseline}
    \end{minipage}
    \hfill
    \begin{minipage}{0.24\linewidth}
        \centering
        \includegraphics[width=\linewidth]{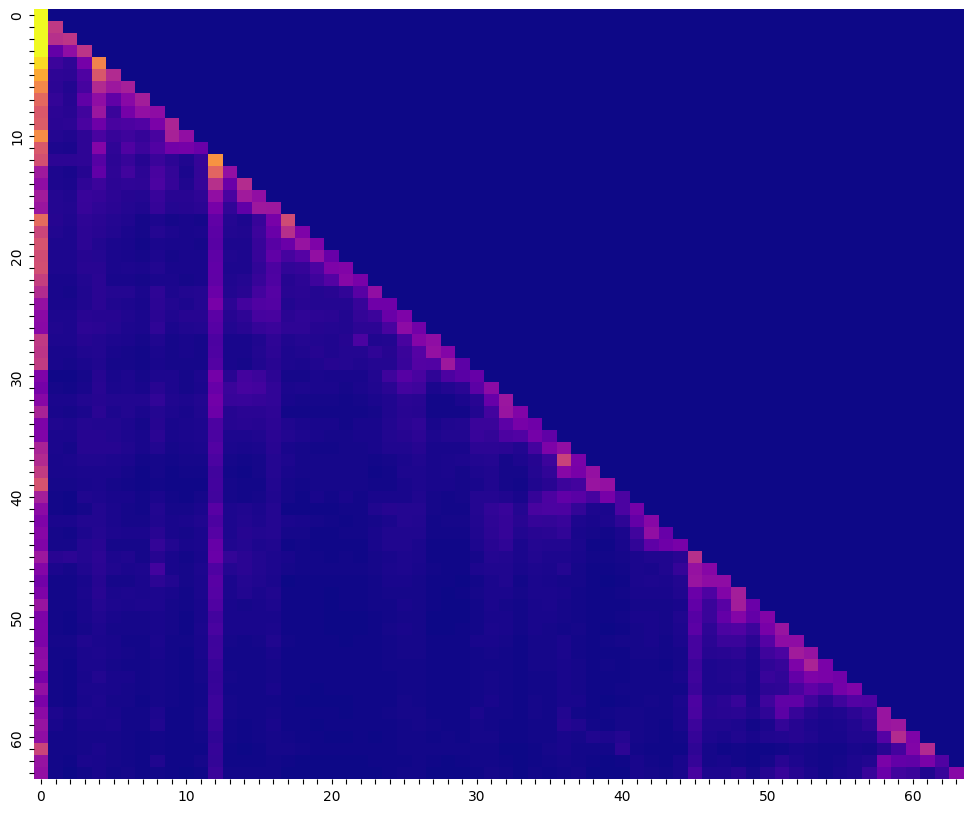}
        \centerline{\small (b) Nexus Outer}
    \end{minipage}
    \hfill
    \begin{minipage}{0.24\linewidth}
        \centering
        \includegraphics[width=\linewidth]{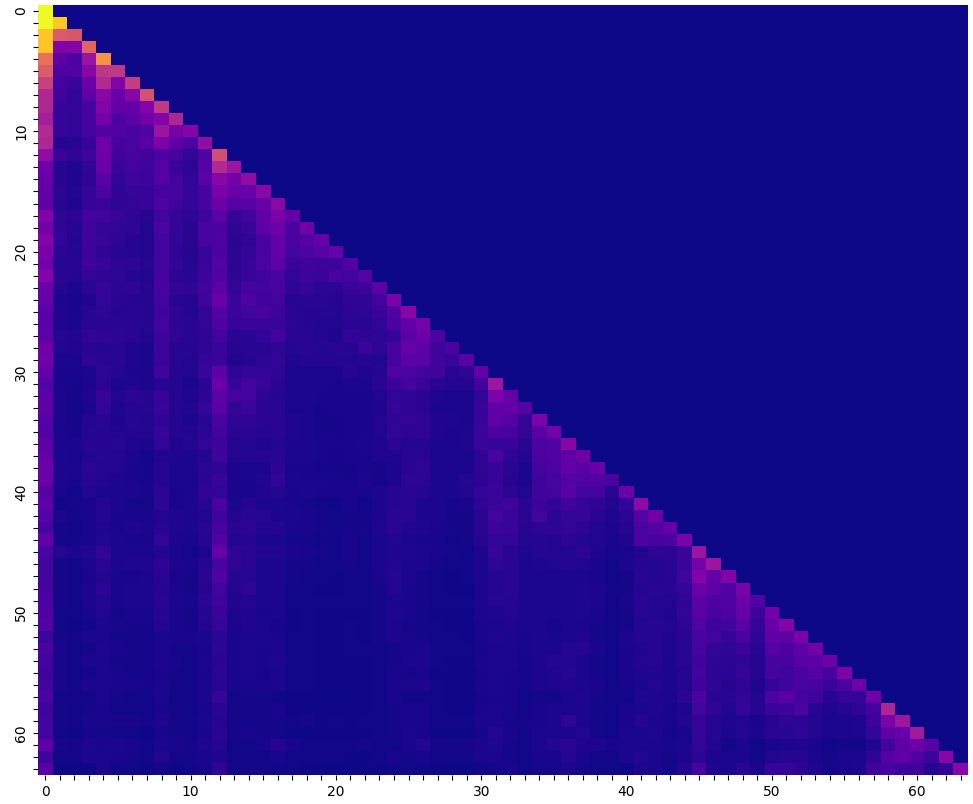}
        \centerline{\small (c) Nexus Inner ($Q$-Attn)}
    \end{minipage}
    \hfill
    \begin{minipage}{0.24\linewidth}
        \centering
        \includegraphics[width=\linewidth]{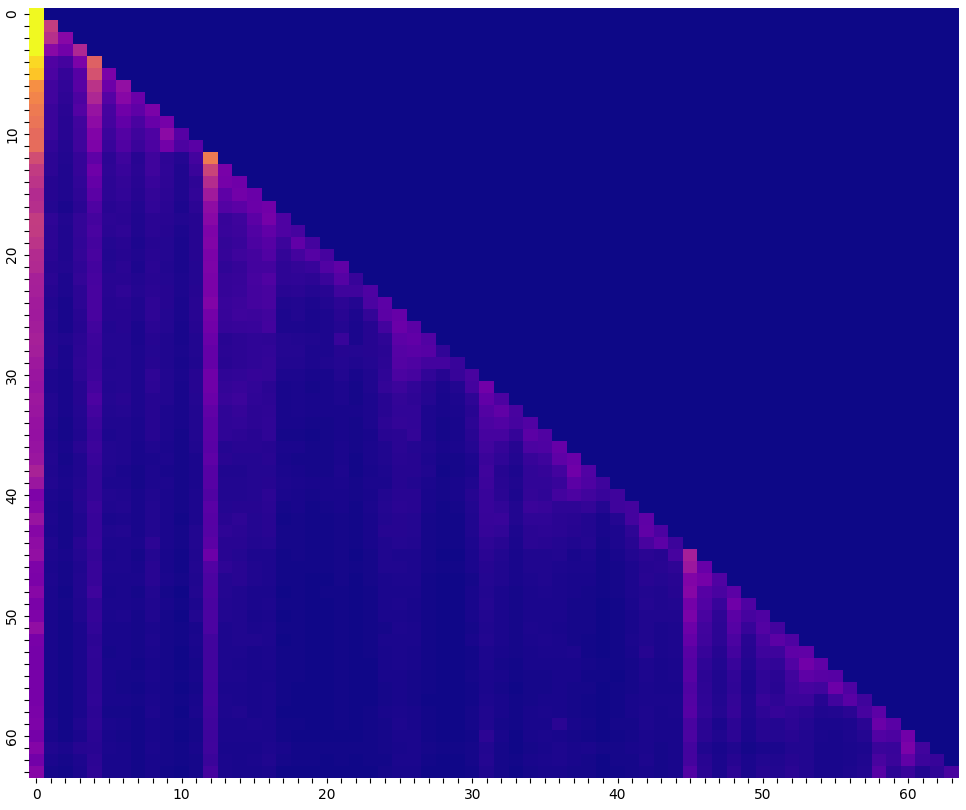}
        \centerline{\small (d) Nexus Inner ($K$-Attn)}
    \end{minipage}
    \caption{Visualization of average attention heatmaps. The x-axis represents Key positions, and the y-axis represents Query positions. Brighter colors indicate higher attention weights. (a) Standard Self-Attention in Pythia. (b) The outer main attention of the Nexus network. (c) The inner recursive attention used to project Queries. (d) The inner recursive attention used to project Keys.}
    \label{fig:attn_maps}
\end{figure*}

\paragraph{Preservation of Causal Structure.}
Comparing the baseline attention (Figure~\ref{fig:attn_maps}a) with the Nexus outer attention (Figure~\ref{fig:attn_maps}b), we observe that both maintain the fundamental structural characteristics of causal language models: a strong diagonal focus (attending to local context) and a prominent vertical band on the left (attending to the beginning-of-sentence token). This similarity is crucial; it indicates that the Higher-Order Attention mechanism preserves the essential capabilities of the Transformer to model sequential order and global context, ensuring stability while enhancing expressiveness.

\paragraph{Role of Inner Attentions.}
The distinct power of our method is revealed in the inner recursive layers (Figure~\ref{fig:attn_maps}c and \ref{fig:attn_maps}d). 
Specifically, the \textbf{Inner $K$-Attention} (Figure~\ref{fig:attn_maps}d) exhibits distinct vertical stripes that are less pronounced in the baseline or Query attention. A vertical stripe in an attention map implies that a specific token at that position is attended to by many subsequent tokens. This suggests that the Inner $K$-Attention acts as a \textit{semantic highlighter}: it identifies and aggregates globally relevant information (e.g., keywords or entities) into the Key representation before the main attention mechanism even computes the final scores. 

\paragraph{Contextualized Projections.}
In a standard Transformer, the Query and Key projections are linear and context-agnostic (i.e., $K_i$ depends only on $x_i$). In contrast, our visualization confirms that in the Nexus network, the vector used as a "Key" for the outer loop is itself a context-aware representation aggregated from previous tokens via the inner loop. The visualization of $\text{Nexus}_Q$ and $\text{Nexus}_K$ demonstrates that the model learns to dynamically adjust these projections based on the sequence context, effectively performing a "pre-reasoning" step that simplifies the task for the final outer attention layer.

\subsection{Retrofitting Standard Transformers for Reasoning}
To demonstrate the versatility and practical applicability of our approach, we investigated whether existing standard Transformer models could be "upgraded" to the Nexus architecture during the Supervised Fine-Tuning (SFT) stage. We utilized the Qwen2.5-Base models (1.5B and 7B) as our starting points.

\paragraph{Setup.}
We employed the Open-R1 framework~\footnote{https://github.com/huggingface/open-r1} to conduct SFT on reasoning-intensive datasets. For the baseline, we directly fine-tuned the standard Qwen2.5 models. For our method, we converted the pre-trained standard attention layers into Higher-Order Attention layers (initializing the recursive components randomly while retaining the pre-trained weights for the outer projections) and then performed SFT under identical settings. We evaluated the models on three challenging reasoning benchmarks: MATH-500, AIME24, and GPQA-Diamond.

\begin{table}[h]
	\centering
	\caption{Performance comparison of Qwen2.5 models fine-tuned with standard architecture vs. retrofitted Nexus architecture on reasoning benchmarks. \textbf{SFT} denotes standard Supervised Fine-Tuning, while \textbf{Nexus-SFT} denotes fine-tuning after converting the architecture. Scores are reported as accuracy.}
	\label{tab:sft_reasoning}
	\resizebox{\linewidth}{!}{
		\begin{tabular}{ll|ccc|c}
			\toprule
			\textbf{Base Model} & \textbf{Method} & \textbf{MATH-500} & \textbf{AIME24} & \textbf{GPQA-Diamond} & \textbf{Avg.} \\
			\midrule
			\multirow{2}{*}{Qwen2.5-1.5B} 
			& Standard SFT & 0.786 & 0.194 & 0.276 & 0.419 \\
			& \textbf{Nexus-SFT (Ours)} & \textbf{0.801} & 0.194 & \textbf{0.280} & \textbf{0.425} \\
			\midrule
			\multirow{2}{*}{Qwen2.5-7B} 
			& Standard SFT & \textbf{0.921} & 0.452 & 0.401 & 0.591 \\
			& \textbf{Nexus-SFT (Ours)} & \textbf{0.921} & \textbf{0.475} & \textbf{0.407} & \textbf{0.601} \\
			\bottomrule
		\end{tabular}
	}
\end{table}

\paragraph{Results and Analysis.}
The results, summarized in Table~\ref{tab:sft_reasoning}, indicate that retrofitting standard Transformers with Higher-Order Attention yields consistent improvements in reasoning capabilities.
\begin{itemize}
    \item \textbf{On the 1.5B scale}, Nexus-SFT achieves a significant gain on MATH-500 (+1.5\%) and improvements on GPQA, demonstrating that smaller models can benefit from the enhanced expressiveness of higher-order interactions to solve math problems.
    \item \textbf{On the 7B scale}, the improvements are particularly notable in the AIME24 benchmark (+2.3\%), which consists of challenging mathematics competitions. This suggests that as the model scale increases, the Nexus architecture effectively leverages the recursive mechanism to handle complex, multi-step reasoning chains required for competition-level mathematics.
\end{itemize}
These findings suggest that the Nexus architecture is not limited to pre-training from scratch but can serve as an effective "architectural upgrade" for existing models, unlocking further potential in complex reasoning tasks with minimal adaptation cost.

\section{Conclusion}

Transformers have achieved significant success across various domains, primarily due to their self-attention mechanisms that efficiently capture long-range dependencies. However, traditional first-order attention methods encounter limitations when addressing complex tasks involving intricate, multi-token relationships. In this paper, we introduce the Higher-Order Attention Network (Nexus network), which employs a novel recursive higher-order attention mechanism to process query and key vectors iteratively, thereby constructing a multi-level attention framework. This approach enhances the model's ability to capture hierarchical dependencies and complex interactions within a single attention layer, leading to improved scalability and performance. Theoretically, we demonstrate that our higher-order mechanism offers greater expressiveness compared to existing methods. Empirical evaluations on multiple benchmark datasets confirm that the Nexus network consistently outperforms standard Transformers. Moreover, we demonstrated that our architecture allows for the effective upcycling of existing pre-trained models. By retrofitting standard Transformers with Higher-Order Attention during the fine-tuning stage, we achieved significant gains in complex mathematical reasoning benchmarks. These advancements position the Nexus network not only as a robust foundation for new models but also as a practical enhancement for the current generation of Large Language Models. Future work will focus on optimizing the computational efficiency of higher-order attentions and exploring their applicability in broader domains, including vision and multimodal learning.

\bibliography{example_paper}

@inproceedings{biderman2023pythia,
  title={Pythia: A suite for analyzing large language models across training and scaling},
  author={Biderman, Stella and Schoelkopf, Hailey and Anthony, Quentin Gregory and Bradley, Herbie and O’Brien, Kyle and Hallahan, Eric and Khan, Mohammad Aflah and Purohit, Shivanshu and Prashanth, USVSN Sai and Raff, Edward and others},
  booktitle={International Conference on Machine Learning},
  pages={2397--2430},
  year={2023},
  organization={PMLR}
}

@article{zellers2019hellaswag,
  title={Hellaswag: Can a machine really finish your sentence?},
  author={Zellers, Rowan and Holtzman, Ari and Bisk, Yonatan and Farhadi, Ali and Choi, Yejin},
  journal={arXiv preprint arXiv:1905.07830},
  year={2019}
}

@article{dziri2023faith,
  title={Faith and fate: Limits of transformers on compositionality},
  author={Dziri, Nouha and Lu, Ximing and Sclar, Melanie and Li, Xiang Lorraine and Jiang, Liwei and Lin, Bill Yuchen and Welleck, Sean and West, Peter and Bhagavatula, Chandra and Le Bras, Ronan and others},
  journal={Advances in Neural Information Processing Systems},
  volume={36},
  pages={70293--70332},
  year={2023}
}

@article{cobbe2021training,
  title={Training verifiers to solve math word problems},
  author={Cobbe, Karl and Kosaraju, Vineet and Bavarian, Mohammad and Chen, Mark and Jun, Heewoo and Kaiser, Lukasz and Plappert, Matthias and Tworek, Jerry and Hilton, Jacob and Nakano, Reiichiro and others},
  journal={arXiv preprint arXiv:2110.14168},
  year={2021}
}

@article{clark2018think,
  title={Think you have solved question answering? try arc, the ai2 reasoning challenge},
  author={Clark, Peter and Cowhey, Isaac and Etzioni, Oren and Khot, Tushar and Sabharwal, Ashish and Schoenick, Carissa and Tafjord, Oyvind},
  journal={arXiv preprint arXiv:1803.05457},
  year={2018}
}

@article{liu2020logiqa,
  title={Logiqa: A challenge dataset for machine reading comprehension with logical reasoning},
  author={Liu, Jian and Cui, Leyang and Liu, Hanmeng and Huang, Dandan and Wang, Yile and Zhang, Yue},
  journal={arXiv preprint arXiv:2007.08124},
  year={2020}
}

@article{kaplan2020scaling,
  title={Scaling laws for neural language models},
  author={Kaplan, Jared and McCandlish, Sam and Henighan, Tom and Brown, Tom B and Chess, Benjamin and Child, Rewon and Gray, Scott and Radford, Alec and Wu, Jeffrey and Amodei, Dario},
  journal={arXiv preprint arXiv:2001.08361},
  year={2020}
}

@article{wei2022chain,
  title={Chain-of-thought prompting elicits reasoning in large language models},
  author={Wei, Jason and Wang, Xuezhi and Schuurmans, Dale and Bosma, Maarten and Xia, Fei and Chi, Ed and Le, Quoc V and Zhou, Denny and others},
  journal={Advances in neural information processing systems},
  volume={35},
  pages={24824--24837},
  year={2022}
}

@article{welbl2017crowdsourcing,
  title={Crowdsourcing multiple choice science questions},
  author={Welbl, Johannes and Liu, Nelson F and Gardner, Matt},
  journal={arXiv preprint arXiv:1707.06209},
  year={2017}
}

@article{gao2020pile,
  title={The pile: An 800gb dataset of diverse text for language modeling},
  author={Gao, Leo and Biderman, Stella and Black, Sid and Golding, Laurence and Hoppe, Travis and Foster, Charles and Phang, Jason and He, Horace and Thite, Anish and Nabeshima, Noa and others},
  journal={arXiv preprint arXiv:2101.00027},
  year={2020}
}

@inproceedings{bisk2020piqa,
  title={Piqa: Reasoning about physical commonsense in natural language},
  author={Bisk, Yonatan and Zellers, Rowan and Gao, Jianfeng and Choi, Yejin and others},
  booktitle={Proceedings of the AAAI conference on artificial intelligence},
  volume={34},
  number={05},
  pages={7432--7439},
  year={2020}
}

@inproceedings{bhojanapalli2020low,
	title={Low-rank bottleneck in multi-head attention models},
	author={Bhojanapalli, Srinadh and Yun, Chulhee and Rawat, Ankit Singh and Reddi, Sashank and Kumar, Sanjiv},
	booktitle={International conference on machine learning},
	pages={864--873},
	year={2020},
	organization={PMLR}
}

@inproceedings{vaswani2017attention,
	title={Attention is all you need},
	author={Vaswani, Ashish and Shazeer, Noam and Parmar, Niki and Uszkoreit, Jakob and Jones, Llion and Gomez, Aidan N and Kaiser, Lukasz and Polosukhin, Illia},
	booktitle={Advances in neural information processing systems},
	year={2017},
	pages={5998--6008}
}

@misc{brown2020language,
	title={Language Models are Few-Shot Learners}, 
	author={Tom B. Brown and Benjamin Mann and Nick Ryder and Melanie Subbiah and Jared Kaplan and Prafulla Dhariwal and Arvind Neelakantan and Pranav Shyam and Girish Sastry and Amanda Askell and Sandhini Agarwal and Ariel Herbert-Voss and Gretchen Krueger and Tom Henighan and Rewon Child and Aditya Ramesh and Daniel M. Ziegler and Jeffrey Wu and Clemens Winter and Christopher Hesse and Mark Chen and Eric Sigler and Mateusz Litwin and Scott Gray and Benjamin Chess and Jack Clark and Christopher Berner and Sam McCandlish and Alec Radford and Ilya Sutskever and Dario Amodei},
	year={2020},
	eprint={2005.14165},
	archivePrefix={arXiv},
	primaryClass={cs.CL}
}

@misc{touvron2023llama,
	title={LLaMA: Open and Efficient Foundation Language Models}, 
	author={Hugo Touvron and Thibaut Lavril and Gautier Izacard and Xavier Martinet and Marie-Anne Lachaux and Timothée Lacroix and Baptiste Rozière and Naman Goyal and Eric Hambro and Faisal Azhar and Aurelien Rodriguez and Armand Joulin and Edouard Grave and Guillaume Lample},
	year={2023},
	eprint={2302.13971},
	archivePrefix={arXiv},
	primaryClass={cs.CL}
}

@misc{openai2023chatgpt,
	title={ChatGPT (Mar 14 version)},
	author={OpenAI},
	year={2023},
	howpublished={\url{https://chat.openai.com/chat}},
}

@inproceedings{dosovitskiy2020image,
	title={An Image is Worth 16x16 Words: Transformers for Image Recognition at Scale},
	author={Dosovitskiy, Alexey and Beyer, Lucas and Kolesnikov, Alexander and Weissenborn, Dirk and Zhai, Xiaohua and Unterthiner, Thomas and Dehghani, Mostafa and Minderer, Matthias and Heigold, Georg and Gelly, Sylvain and others},
	booktitle={International Conference on Learning Representations},
	year={2020}
}

@article{jumper2021highly,
	title={Highly accurate protein structure prediction with AlphaFold},
	author={Jumper, John and Evans, Richard and Pritzel, Alexander and Green, Tim and Figurnov, Michael and Ronneberger, Olaf and Tunyasuvunakool, Kathryn and Bates, Russ and {\v{Z}}{\'\i}dek, Augustin and Potapenko, Anna and others},
	journal={nature},
	volume={596},
	number={7873},
	pages={583--589},
	year={2021},
	publisher={Nature Publishing Group}
}

@article{bi2023accurate,
	title={Accurate medium-range global weather forecasting with 3D neural networks},
	author={Bi, Kaifeng and Xie, Lingxi and Zhang, Hengheng and Chen, Xin and Gu, Xiaotao and Tian, Qi},
	journal={Nature},
	volume={619},
	number={7970},
	pages={533--538},
	year={2023},
	publisher={Nature Publishing Group UK London}
}

@article{wang2020linformer,
	title={Linformer: Self-attention with linear complexity},
	author={Wang, Sinong and Li, Belinda Z and Khabsa, Madian and Fang, Han and Ma, Hao},
	journal={arXiv preprint arXiv:2006.04768},
	year={2020}
}

@inproceedings{choromanski2020rethinking,
	title={Rethinking Attention with Performers},
	author={Choromanski, Krzysztof Marcin and Likhosherstov, Valerii and Dohan, David and Song, Xingyou and Gane, Andreea and Sarlos, Tamas and Hawkins, Peter and Davis, Jared Quincy and Mohiuddin, Afroz and Kaiser, Lukasz and others},
	booktitle={International Conference on Learning Representations},
	year={2020}
}

@inproceedings{kitaev2019reformer,
	title={Reformer: The Efficient Transformer},
	author={Kitaev, Nikita and Kaiser, Lukasz and Levskaya, Anselm},
	booktitle={International Conference on Learning Representations},
	year={2019}
}

@article{child2019generating,
	title={Generating long sequences with sparse transformers},
	author={Child, Rewon and Gray, Scott and Radford, Alec and Sutskever, Ilya},
	journal={arXiv preprint arXiv:1904.10509},
	year={2019}
}

@article{beltagy2020longformer,
	title={Longformer: The long-document transformer},
	author={Beltagy, Iz and Peters, Matthew E and Cohan, Arman},
	journal={arXiv preprint arXiv:2004.05150},
	year={2020}
}

@inproceedings{huang2019attention,
	title={Attention on attention for image captioning},
	author={Huang, Lun and Wang, Wenmin and Chen, Jie and Wei, Xiao-Yong},
	booktitle={Proceedings of the IEEE/CVF international conference on computer vision},
	pages={4634--4643},
	year={2019}
}

@inproceedings{xia2022vision,
	title={Vision transformer with deformable attention},
	author={Xia, Zhuofan and Pan, Xuran and Song, Shiji and Li, Li Erran and Huang, Gao},
	booktitle={Proceedings of the IEEE/CVF conference on computer vision and pattern recognition},
	pages={4794--4803},
	year={2022}
}

@article{hajij2022higher,
	title={Higher-order attention networks},
	author={Hajij, Mustafa and Zamzmi, Ghada and Papamarkou, Theodore and Miolane, Nina and Guzm{\'a}n-S{\'a}enz, Aldo and Ramamurthy, Karthikeyan Natesan},
	journal={arXiv preprint arXiv:2206.00606},
	volume={2},
	number={3},
	pages={4},
	year={2022}
}

@inproceedings{wei2023multimodal,
	title={Multimodal high-order relation transformer for scene boundary detection},
	author={Wei, Xi and Shi, Zhangxiang and Zhang, Tianzhu and Yu, Xiaoyuan and Xiao, Lei},
	booktitle={Proceedings of the IEEE/CVF International Conference on Computer Vision},
	pages={22081--22090},
	year={2023}
}
\bibliographystyle{icml2025}

\newpage
\appendix
\onecolumn
\section{Theorem Proof} \label{ap:A}
\begin{theorem}
	(Linear Bottleneck) (1) Given any $N$ different inputs \(X_m \in \mathbb{R}^{n \times d}, m=1, \dots, N\) and the corresponding 
	target row stochastic matrices \(A_m \in \mathbb{R}^{n \times n}\), as long as \(rank(\log(A_m)) \leq d_k\), there always 
	exists two mappings \(Q, K: \mathbb{R}^{n \times d} \rightarrow \mathbb{R}^{n \times d_k}\) such that
	\begin{equation} \label{eq:14}
		\sm(\frac{Q(X_m)K(X_m)^\top}{\sqrt{d_k}}) = A_m, \quad m=1, \dots, N.	
	\end{equation}
	(2) If $d<n-1$, there exist \(A_m \in \mathbb{R}^{n \times n}\)  that satisfies \(rank(\log(A_m))=1\) but \eqref{eq:14} still does not hold for all 
	linear transformations $Q(X)=XW_q, K(X)=XW_k$.
	
\end{theorem}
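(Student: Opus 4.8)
The plan is to handle the two parts by quite different means: part (1) is an interpolation statement exploiting the freedom of nonlinear maps, whereas part (2) is the genuine obstruction and the technical core.

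For part (1), I would first isolate two elementary properties of the row-wise softmax: it is invariant under adding an arbitrary constant to each row, and for any row-stochastic $A$ one has $\sm(\log A)=A$, since $\sum_j e^{\log A_{ij}}=\sum_j A_{ij}=1$. It therefore suffices to realise the score matrix $Q(X_m)K(X_m)^\top/\sqrt{d_k}$ as $\log A_m$ exactly. Because $\mathrm{rank}(\log A_m)\le d_k$, I would take a rank factorisation $\sqrt{d_k}\,\log A_m=Q_mK_m^\top$ with $Q_m,K_m\in\mathbb{R}^{n\times d_k}$ (padding the factors with zero columns up to width $d_k$). Since the inputs $X_1,\dots,X_N$ are distinct, the assignments $Q(X_m)=Q_m$ and $K(X_m)=K_m$ define legitimate mappings (extended arbitrarily off the finite set $\{X_m\}$), and by construction $\sm(Q(X_m)K(X_m)^\top/\sqrt{d_k})=\sm(\log A_m)=A_m$. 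The only ingredient is that the maps may be arbitrary functions rather than linear ones.

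For part (2), since a single unrealisable target already refutes the linear claim, I fix one input $X$ and seek a bad $A$. The key structural observation is a subspace constraint that holds for \emph{every} choice of $W_q,W_k$. Writing $W=W_qW_k^\top$, the linear score matrix is $S=\tfrac{1}{\sqrt{d_k}}XWX^\top$, and each row of $S$ (viewed as a vector in $\mathbb{R}^n$) lies in $\mathrm{col}(X)$, which has dimension at most $d$; note this is independent of $d_k$, so shrinking $d_k$ only strengthens the obstruction. Invoking the softmax row-shift invariance, the requirement $\sm(S)=A$ forces $S=\log A+c\,\mathbf{1}^\top$ for some $c\in\mathbb{R}^n$, hence every row of $\log A$ must lie in $V:=\mathrm{col}(X)+\mathrm{span}(\mathbf{1})$. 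Since $d<n-1$ implies $\dim V\le d+1\le n-1<n$, the subspace $V$ is proper; the extra $+1$ is precisely the row-shift direction $\mathbf{1}$, which explains why the threshold is $n-1$ and not $n$. Consequently any $A$ whose log-rows leave $V$ cannot be represented.

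The main obstacle, and the step I would treat most carefully, is producing a \emph{bona fide} row-stochastic $A$ with $\mathrm{rank}(\log A)=1$ that violates this constraint: a rank-one matrix is trivial to write down, but it must simultaneously be the exact entrywise logarithm of a matrix whose rows sum to one. I would search within the family $\log A=u\,\mathbf{1}v^\top$ (identical rows, each equal to $uv^\top$), so that $A_{ij}=e^{uv_j}$ and row-stochasticity reduces to the single scalar equation $g(u):=\sum_j e^{uv_j}=1$. Choosing $v$ with strictly positive entries and $v\notin V$ — possible because the open positive orthant cannot be contained in the proper subspace $V$ — gives $g(0)=n>1$ and $g(u)\to0$ as $u\to-\infty$, so by the intermediate value theorem some $u^\ast<0$ satisfies $g(u^\ast)=1$. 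Then $A_{ij}=e^{u^\ast v_j}$ is row-stochastic, $\log A=u^\ast\mathbf{1}v^\top$ has rank exactly one, and each of its rows equals $u^\ast v^\top\notin V$. This contradicts the necessary condition from the previous paragraph, so no linear $Q,K$ can reproduce this $A$, completing part (2).
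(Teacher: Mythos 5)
Your proof is correct and takes essentially the same route as the paper's: part (1) uses $\sm(\log A_m)=A_m$ together with a rank factorization (the paper uses SVD) realized by arbitrary non-linear maps on the finite set of distinct inputs, and part (2) shows that linear scores force every row of $\log A$ into $\mathrm{col}(X)+\mathrm{span}(\mathbf{1})$ and then exhibits a constant-row, rank-one $\log A$ outside that subspace --- exactly the paper's construction $A=\exp(\mathbf{1}\mathbf{a}^\top)$ with $\mathbf{a}$ linearly independent of $\mathrm{Col}(X)\cup\{\mathbf{1}\}$. One merit of your write-up is that the intermediate-value-theorem choice of $u^\ast$ explicitly justifies the normalization the paper only asserts ``without loss of generality'' (namely $\exp(\mathbf{a})^\top\mathbf{1}=1$), which the paper leaves implicit (it amounts to shifting $\mathbf{a}$ by a multiple of $\mathbf{1}$, a direction already inside the spanning set, so independence is preserved).
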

\begin{proof}
	(1) Let \(P_m:=\sqrt{d_k}\log(A_m)\), then we have
	\begin{align}
		(\sm(\frac{P_m}{\sqrt{d_k}}))_{ij}&= (\sm(\log(A_m)))_{ij} \\
		&= \exp(\log((A_m)_{ij}))/\sum_{k=1}^{n}\exp(\log((A_m)_{ik})) \\
		&= A_{ij}/  \sum_{k=1}^{n} A_{ik} \\
		&= A_{ij},
	\end{align}
	that is, 
	\begin{equation}
		\sm(\frac{P_m}{\sqrt{d_k}})=A_m
	\end{equation}
	According to the SVD decomposition, there exist matrices 
	\(U_m \in \mathbb{R}^{n \times d_k}, \Sigma_m\in \mathbb{R}^{d_k \times d_k}, V_m \in \mathbb{R}^{n \times d_k}\) such that \(P_m = U_m\Sigma_mV_m^\top\).
	We only need to set $Q(X_m) := U_m\Sigma_m, K(X_m):=V_m$ and then
	\begin{align}
		\sm(\frac{Q(X_m)K(X_m)^\top}{\sqrt{d_k}}) =& \sm(\frac{U_m\Sigma_mV_m^\top}{\sqrt{d_k}}) \\
		=& \sm(\frac{P_m}{\sqrt{d_k}}) \\ 
		=& A_m
	\end{align}
	for \(m=1,\dots N\). \\
	(2) We only need to consider the setting of $N=1$, so we omit  the corner mark $m$. The condition \(d < n-1\) implies that there 
	exists a vector \(\mathbf{a} \in \mathbb{R}^{1 \times n}\) that is linearly 
	independent of \(\text{Col}(X)\cup\{\mathbf{1}\}\).
	Without loss of generality, let's assume \(\exp(\mathbf{a})^T\mathbf{1} = 1 \). 
	Set 
	\begin{equation}
		A := \exp(\mathbf{1}\mathbf{a}^\top), 
	\end{equation}
	then 
	\begin{equation}
		A\mathbf{1} = \exp(\mathbf{1}\mathbf{a}^\top) = \mathbf{1}\exp(a)^\top\mathbf{1}=\mathbf{1}
	\end{equation}
	and thus $A$ is a row stochastic matrix. 
	Now we prove that the standard attention mechanism with linear transformations even cannot represent this \(\log(A)\) of rank 1.
	Assume there exists $Q = XW_q, K=XW_k$ such that
	\begin{equation} \label{eq:24}
		\sm(\frac{QK^\top}{\sqrt{d_k}}) = A.
	\end{equation}
	Then we have 
	\begin{equation}  \label{eq:25}
		A = D\exp{\frac{QK^\top}{\sqrt{d_k}}}
	\end{equation}
	where $D$ is a $n\times n$ diagonal matrix with
	\begin{equation}
		(D)_{ii} = (\sum_{j=1}^{n}\exp(\frac{(QK^\top)_{ij}}{\sqrt{d_k}}))^{-1}
	\end{equation}
	Looking into the first row of \eqref{eq:25}, we obtain that
	\begin{equation} \label{eq:27}
		\exp{\mathbf{a}^\top} = (D)_{11}\exp(\frac{(Q)_{1}K^T}{\sqrt{d_k}}).
	\end{equation}
	Transposing both sides of \eqref{eq:27} and taking the logarithm, we get
	\begin{equation}\label{eq:28}
		\mathbf{a} = \frac{K(Q)_1^{\top}}{\sqrt{d_k}} + \log((D)_{11}) \mathbf{1}
		= \frac{XW_k(Q)_1^{\top}}{\sqrt{d_k}} + \log((D)_{11}) \mathbf{1}.
	\end{equation}
	Equation \eqref{eq:28} contradicts the statement 
	that $\mathbf{a}$ is linearly independent of 
	$\text{Col}(X) \cup \{\mathbf{1}\}$, therefore, 
	there does not exist linear transformations that 
	satisfies \eqref{eq:24} !
\end{proof}
\end{document}